\newtheorem{lemma}{Lemma}
\newtheorem{theorem}{Theorem}
\theoremstyle{definition}
\theoremstyle{remark}
\title{Lazy Rearrangement Planning in Confined Spaces}
\author{
    Rui Wang, Kai Gao, Jingjin Yu and Kostas Bekris
}
\newcommand{\arrangementspace}{\mathcal{A}}
\newcommand{\workspace}{\mathcal{W}}
\newcommand{\positionspace}{\mathcal{P}}
\newcommand{\manipulator}{\mathcal{M}}
\newcommand{\objects}{\mathcal{O}}
\newcommand{\cspace}{\mathcal{Q}}
\newcommand{\constraintspace}{\mathcal{C}}
\newcommand{\mRS}{mRS}
\newcommand{\DFSDP}{DFS_{DP}}
\newcommand{\CIRS}{CIRS}
\newcommand{\LRS}{LRS}
\begin{document}

\maketitle

\begin{abstract}
Object rearrangement is important for many applications but remains challenging, especially in confined spaces, such as shelves, where objects cannot be accessed from above and they block reachability to each other. Such constraints require many motion planning and collision checking calls, which are computationally expensive. In addition, the arrangement space grows exponentially with the number of objects. To address these issues, this work introduces a lazy evaluation framework with a local monotone solver and a global planner. Monotone instances are those that can be solved by moving each object at most once. A key insight is that reachability constraints at the grasps for objects' starts and goals can quickly reveal dependencies between objects without having to execute expensive motion planning queries. Given that, the local solver builds lazily a search tree that respects these reachability constraints without verifying that the arm paths are collision free. It only collision checks when a promising solution is found. If a monotone solution is not found, the non-monotone planner loads the lazy search tree and explores ways to move objects to intermediate locations from where monotone solutions to the goal can be found. Results show that the proposed framework can solve difficult instances in confined spaces with up to 16 objects, which state-of-the-art methods fail to solve. It also solves problems faster than alternatives, when the alternatives find a solution. It also achieves high-quality solutions, i.e., only 1.8 additional actions on average are needed for non-monotone instances. 
\end{abstract}

\section{Introduction}
\label{sec:intro}

Rearrangement in confined spaces is a critical robot skill in setups such as aligning objects in shelves (Fig. \ref{fig:real_robot_setting}), object retrieval and part assembly. These are harder setups than the less constrained tabletop case, where all objects can be directly accessed with top-down grasps and then lifted sufficiently high to avoid collisions. Tabletop rearrangement allows to ignore robot-object and object-object interactions, which leads to faster solutions and stronger guarantees. In the confined setup considered here, the robot arm has limited space to maneuver and cannot necessarily access all objects at any point, since top-down grasps are not available. Solving such tasks becomes harder even for a few objects, let alone maintaining high efficiency and good solution quality.

\begin{figure}[t!]
    \centering
    \includegraphics[width=0.46\textwidth]{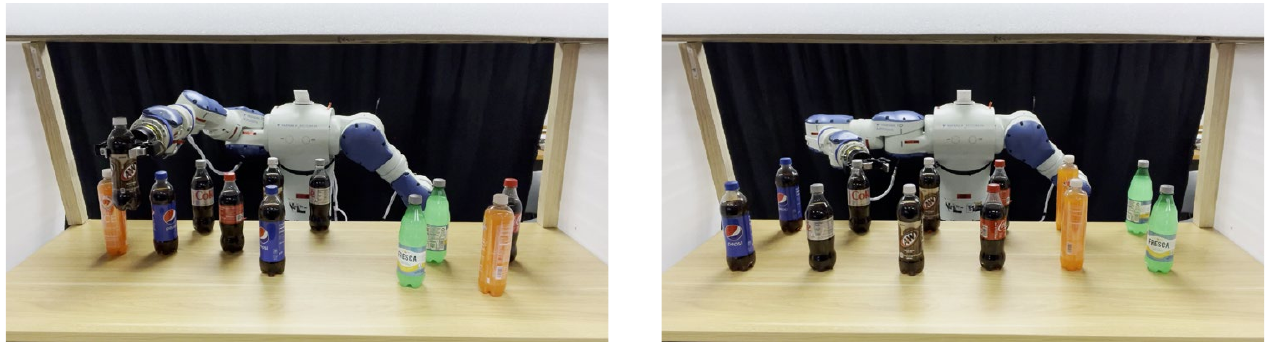}
    \caption{Real robot application of the proposed method. [Left] The robot rearranging objects. [Right] Final result.}
    \label{fig:real_robot_setting}
\end{figure}

Many existing solutions for confined rearrangement using prehensile primitives\footnote{i.e., using grasping, in contrast to non-prehensile primitives, such as pushing \cite{king2015nonprehensile, king2017unobservable, papallas2020non, vieira2022persistent}.} follow a similar high-level strategy \cite{stilman2007manipulation, wang2021uniform, wang2022efficient}. They build a global tree $T_g = (V, E)$ (Fig. \ref{fig:history_of_methods}(a)) where a node $v \in V$ (circles in Fig. \ref{fig:history_of_methods}(a)) represents an arrangement state $\alpha_v$ of the objects and an edge $e(v,u) \in E$ (arrows in Fig. \ref{fig:history_of_methods}(a)) represents a collision-free arm path so as to pick-and-place an object to transition from arrangement $\alpha_v$ to arrangement $\alpha_u$. The pick-and-place path can be computed by calling a motion planner. For non-monotone problems, some objects need to be moved to an intermediate location different from both the object's start and goal. For these instances, the global tree can be built hierarchically by concatenating subtrees $T_{sub}$ returned by a local monotone solver as partial solutions until the final arrangement $\alpha_F$ is connected to the initial arrangement $\alpha_I$, as in Fig. \ref{fig:history_of_methods}(a).

\begin{figure*}[ht]
    \centering
    \includegraphics[width=0.85\textwidth]{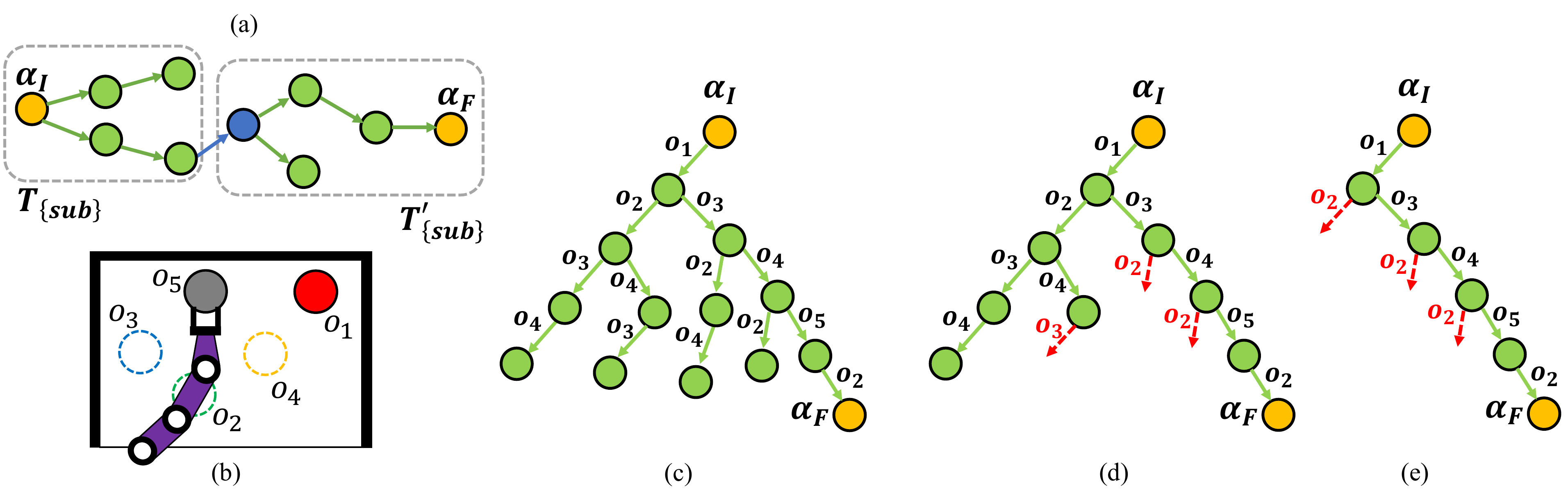}
    \caption{(a) A typical tree for non-monotone rearrangement problems, which grows incrementally local trees ($T_{sub}$, $T^\prime_{sub}$), which are monotone, i.e., an object is moved at most once and only to its goal at the final arrangement $\alpha_F$. The edge connecting the two trees (dark blue arrow) corresponds to moving an object to an intermediate location. (b) A toy example with 5 objects (start positions are denoted as solid circles while goal positions as dashed circles). The goal position of $o_2$ makes the robot arm unable to move $o_5$ after the placement of $o_2$. (c-e) The tree corresponding to $\tt \mRS$ (c), $\tt \DFSDP$ (d) and $\tt \CIRS$ (e) on the toy example. }
    \label{fig:history_of_methods}
\end{figure*}

A basic \emph{monotone Rearrangement Solver} ($\tt \mRS$)  \cite{stilman2007manipulation} explores all possible object orderings that connect $\alpha_I$ to $\alpha_F$ via backtracking search (Fig. \ref{fig:history_of_methods}(c)). It faces scaling issues due to its $O(n!)$ time complexity. Dynamic programming solutions, such as $\tt \DFSDP$  \cite{wang2021uniform}, observe that solving a subproblem from an arrangement $\alpha$ (i.e., moving the remaining objects to their goals) does not depend on how $\alpha$ is reached from $\alpha_I$, i.e., the ordering with which the objects reach $\alpha$ does not change the feasibility of the subsequent task $\alpha \rightarrow \alpha_F$. In the toy example of Fig. \ref{fig:history_of_methods}(b), no solution is found from $\alpha$ obtained from the branch $o_1 \rightarrow o_2 \rightarrow o_3$. This can also quickly invalidate the branch $o_1 \rightarrow o_3 \rightarrow o_2$ (Fig. \ref{fig:history_of_methods}(d)). Therefore, an arrangement state is sufficient to represent all orderings of objects, which result in the same placements. Given this observation, $\tt \DFSDP$ solves monotone problems with complexity $O(2^n)$, a significant improvement over $O(n!)$.

Nevertheless, neither $\tt \mRS$, nor $\tt \DFSDP$ take advantage of constraint reasoning to further prune the search space. As robot-object constraints arise often in confined and cluttered setups, such reasoning can increase efficiency. In the toy example of Fig. \ref{fig:history_of_methods}(b), the goal position of $o_2$ makes $o_5$ unreachable, which can be identified just by considering robot configurations for grasping $o_5$. Therefore, moving $o_2$ should be treated as an invalid action for a monotone solution at any arrangement state where $o_5$ is still at its start. Such constraints can significantly prune the search space as in Fig. \ref{fig:history_of_methods}(e). Such constraints were first considered in prior related work  ($\tt \CIRS$) \cite{wang2022efficient} but not comprehensively. 

The current work focuses on utilizing such constraints to reduce the computational overhead due to the most expensive primitive, i.e., the pick-and-place motion planning calls for each edge of the search tree, which involve collision checking. All the aforementioned methods call a motion planner for every edge of the search tree during its generation to check if the transition to the child arrangement is valid. Consequently, the number of motion planning queries is equivalent to the total number of edges in the search tree, which grows exponentially as the number of objects increase.  The objective here is to avoid this combinatorial number of calls to motion planning, while still finding any solution that existing methods can discover and maintaining high-quality solutions, i.e., not requiring many intermediate locations for non-monotone problems.

If the approach detects early that there is no way to reach the final arrangement from a tree node, then significant time can be saved by not performing motion planning on edges out of that node. And it is possible to identify the infeasibility of arrangements quickly given only the objects' locations and the robot's grasps, i.e., if a set of objects in an arrangement blocks the grasps of another object at its start or its goal, then there is no arm path that can perform the pick-and-place at the corresponding arrangement (up to the resolution of grasps). These constraints are referred here as \emph{reachability constraints}. They arise often and dominate the feasibility of rearrangements in confined spaces. Moreover, computing reachability constraints is relatively easy as grasps can be obtained by  inverse kinematic (IK) solvers in a hundredth of a second; much faster than a motion planning routine for a pick-and-place sequence. In addition, given $k$ such grasps and total $n$ objects, those IK operations are performed $kn$ times ($O(n)$), which is much smaller than the worse case for the size of the search tree ($O(2^n)$).

Overall, this work proposes a lazy evaluation framework for rearrangement in confined spaces. It uses reachability constraints to quickly reveal the dependencies between objects without performing expensive motion planning queries during the search tree generation. Thus, this work: 

\begin{figure*}[ht]
    \centering
    \includegraphics[width=0.80\textwidth]{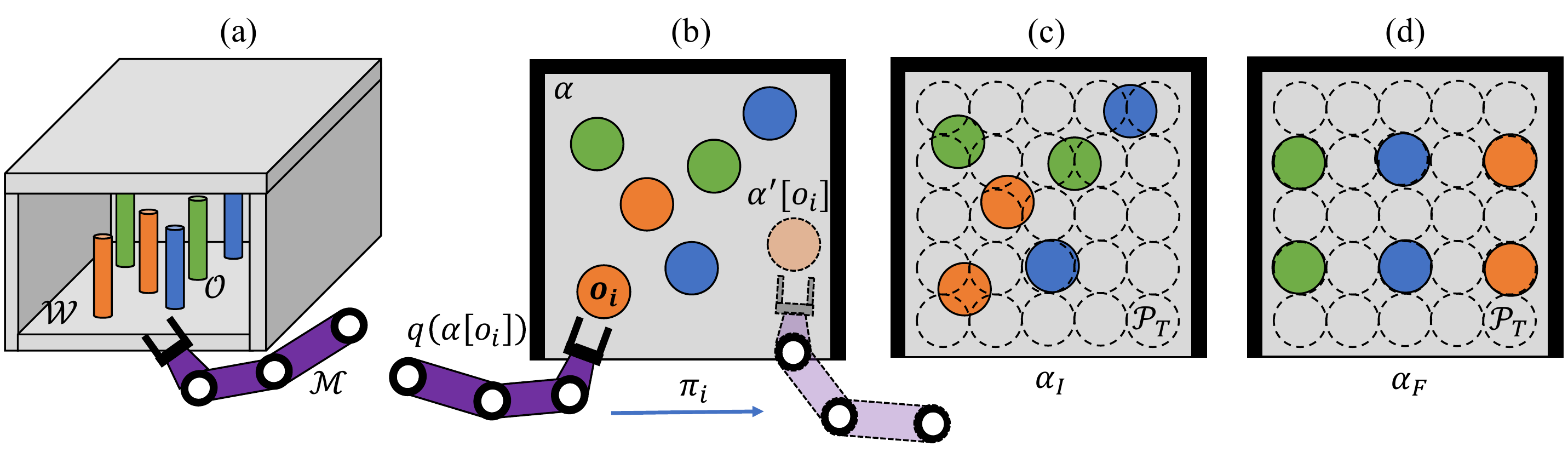}
    \caption{(a) Objects $\objects$ in a workspace $\workspace$, where the arm $\manipulator$ approaches $\objects$ from one side of $\workspace$. (b) The arm moves object $o_i$ at $p_i = \alpha[o_i]$ to a new position $p_i^{\prime} = \alpha^{\prime}[o_i]$ following the arm path $\pi_i$. (c) The initial arrangement $\alpha_I$. (d) A specified final arrangement $\alpha_F$ (e.g., objects' goals are chosen from a set of positions $\positionspace_{T}$ so that same color objects are in the same column).} 
    \label{fig:problem_formulation}
\end{figure*}

\noindent 1. \textbf{Proposes an efficient, lazy local monotone solver that utilizes reachability constraints.} Given the constraints that arise from grasp reachability, the method first builds a lazy search tree that respects these constraints without verifying edge validity via motion planning. It only performs such verification when a promising solution is found.

\noindent 2. \textbf{Proposes a global planner for non-monotone problems with high success rate, efficiency and quality.} The global planner loads the local search tree from the monotone solver and explores how to move objects to intermediate spots, from where a local solver can be called to discover monotone solutions to the goal. Edges are checked lazily only if needed for moving an object to an intermediate position. 

The overall framework is demonstrated to scale up to 16-object instances in confined spaces, which state-of-the-art alternatives fail to solve. The method significantly improves computational efficiency while maintaining high solution quality, i.e., only 1.8 additional actions are needed to fulfill non-monotone tasks on average. The solution of the proposed method is executed on a real robotic system.

\section{Related Work}
\label{sec:related}

Object retrieval in confined setups requires relocating some objects to retrieve an object. The challenges lie in (1) partial observability \cite{xiao2019online}, (2) what and where to relocate obstacles \cite{ahn2021integrated} and (3) which robot relocates which object \cite{ahn2021coordination}. The rearrangement problem considered here treats each object as a target to be relocated from its initial to final position and tends to be harder than object retrieval of a single target. 

Rearrangement planning without violating constraints can be modeled as a constraint satisfaction problem (CSP), where a variable corresponds to a moving action at a time step and the value domain refers to a set of objects to move at that time step. The solution corresponds to an assignment of the variables that respects the constraints. \cite{haralick1980increasing} discusses ways to increase tree efficiency to solve binary CSP while \cite{dechter2003constraint} focuses more on constraints processing. \cite{bartak2010constraint} introduces definitions and techniques of constraint satisfaction in general AI planning and scheduling, many of which are adopted in this work for consistency. Constraint satisfaction techniques are increasingly used in robot planning problems. \cite{havur2014geometric} uses Answer Set Programming (ASP) to decompose the cluttered workspace to place objects. Forward state-space search algorithms are developed to guide robotic spatial extrusion tasks \cite{garrett2020scalable} under geometric and stiffness constraints. \cite{krontiris2016efficiently} introduces a dependency graph describing object constraints and compute solutions via topological sorting. Such an approach, however, can be computational expensive as it can produce many dependency graphs, each of which is computed for a unique combination of grasping configurations and arm paths for manipulating objects.

The proposed lazy evaluation framework delays path verification until a seemingly feasible plan is found. There is extensive work on lazy evaluation for robot path planning, where edge evaluations are expensive, such as lazy variants of the weighted A* \cite{Cohen2015PlanningSM} that postpone expensive operations in N-arm robot problems, a Lazy Receding Horizon A* (LRA*) that balances edge evaluations and graph operations \cite{mandalika2018lazy} and a lazily evaluated Lifelong Planning A* (LPA*) that reduces excessive edge evaluations of LPA* \cite{lim2021lazy}. A generalized lazy search (GLS) framework is introduced to toggle between search and edge evaluation \cite{mandalika2019generalized}. Lazy collision checking is also used in sampling-based motion planners such as Probabilistic Roadmap (PRM) \cite{kavraki2000path, sanchez2003single}, the properties of which have been examined \cite{Hauser2015LazyCC}. These approaches aim at reducing edge evaluation on computing a path (lazy motion planning), while the proposed framework aims at reducing the cost of path verification when computing a task plan (lazy rearrangement planning).

\section{Problem Formulation}
\label{sec:problem_formulation}

There are $n$ uniformly-sized cylindrical objects $\objects=\{o_1, \cdots, o_n\}$ residing in a cuboid, bounded workspace $\workspace \subset \mathbb{R}^3$, each of which at a position $p_i \in \mathbb{R}^2, i \in \{1,\cdots,n\}$ (Fig. \ref{fig:problem_formulation}(a)). Such an assignment of objects $\objects$ to a set of object positions $\{p_1, \cdots, p_n\}$ defines an \emph{arrangement} $\alpha \in \arrangementspace$, where $\arrangementspace$ is the arrangement space. $\alpha[o_i]=p_i$ indicates that object $o_i$ is at position $p_i$ given the arrangement $\alpha$.

A robot arm $\manipulator$ can access the objects $\objects$ from only one side of the workspace $\workspace$ and can move them one at a time. The arm $\manipulator$ acquires a \emph{configuration} $q \in \cspace$ where $\cspace$ is the space of all possible configurations that $\manipulator$ can acquire. The swept volume $V(q)$ represents the space occupied by $\manipulator$ at $q$. If $\manipulator$ is grasping an object, the swept volume also includes the object's volume given the grasp. $q(\alpha[o_i])$ represents a configuration where the arm can grasp $o_i$ at position $p_i = \alpha[o_i]$. An \emph{arm path} $\pi_i: [0,1] \to \cspace$ for an object $o_i$ corresponds to a sequence of configurations that move object $o_i$ from its current position $p_i = \alpha[o_i]$ to another position $p_i^{\prime}$, resulting in a new arrangement $\alpha^{\prime}$ where $\alpha^{\prime}[o_i] = p_i^{\prime}$ and $\forall j \in \{1,\cdots,n\}, j \neq i: \alpha^{\prime}[o_j] = \alpha[o_j]$ (Fig. \ref{fig:problem_formulation}(b)). Such a path is \emph{valid} if no collision arises between $\bigcup_{t=0}^{1}V(\pi_i(t))$ and other static geometries ($\workspace$ and static objects in $\workspace$). 

The workspace is decomposed into a set of possible positions $\positionspace_{T}$ where objects can be placed (Fig. \ref{fig:problem_formulation}(c)(d)). $\mathcal P_T$ are sampled with a resolution $\delta_r$ defined as the distance between adjacent candidates in the same row (or column). Goal positions of objects and intermediate positions called \emph{buffers} are chosen from $\positionspace_{T}$. The rearrangement problem is defined as follows: given any possible initial arrangement $\alpha_I$ (Fig. \ref{fig:problem_formulation}(c)) and a specified final arrangement $\alpha_F$ of $n$ objects $\objects$ (Fig. \ref{fig:problem_formulation}(d)), find a sequence of valid arm paths $\Pi = (\pi_0, \pi_1, \ldots, )$, which moves all objects from $\alpha_I$ to $\alpha_F$. A problem is \emph{monotone} if the sequence $\Pi$ consists of at most one arm path for each object. Otherwise, the problem is \emph{non-monotone} and at least one object needs to be moved to a buffer before being moved to its goal.

\section{Methodology}
\label{sec:methodologies}
This section introduces lazy rearrangement planning and describes: (1) how reachability constraints are generated and used to prune the search space, (2) how the local, monotone solver builds a lazy search tree, and (3) how the global planner operates over these trees for non-monotone instances.

\begin{figure}[t!]
    \centering
    \includegraphics[width=0.42\textwidth]{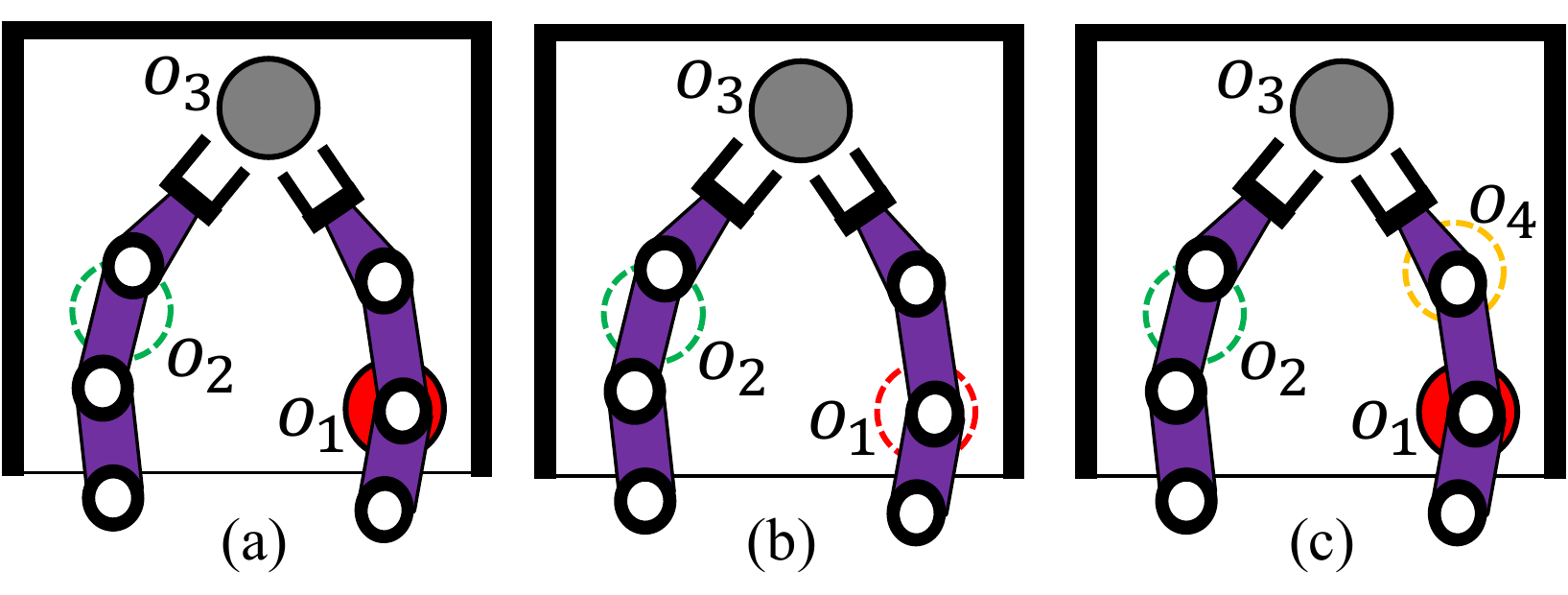}
    \caption{Three examples of the robot arm moving object $o_3$ (gray solid circle). Start and goal positions of objects are denoted as solid circles and dashed circles, respectively. For simplicity, only relevant object positions are displayed.}
    \label{fig:reachability_constraints}
\end{figure}

\subsection{Reachability Constraints}

Consider Fig. \ref{fig:reachability_constraints}(a) where there are two arm configurations for grasping $o_3$: $o_3$ is unreachable from one configuration, if $o_2$ is at its goal; while $o_3$ is unreachable from the other, if $o_1$ is at its start. Define $S_i$ as the event that $o_i$ is at its start while $G_i$ as the event that $o_i$ is at its goal. Then in Fig. \ref{fig:reachability_constraints}(a), $o_3$ is unreachable if the event $G_2 \bigcap S_1$ occurs, which generates the reachability constraint:
\begin{equation}
    \emph{Don't move $o_3$, if $o_2$ is at its goal and $o_1$ is at its start.}  
\end{equation}
In Fig. \ref{fig:reachability_constraints}(b), $o_3$ is unreachable if the event $G_2 \bigcap G_1$ occurs: 
\begin{equation}
    \emph{Don't move $o_3$, if $o_2$ is at its goal and $o_1$ is at its goal.}
\end{equation}
Note that when all reachability constraints arise from goal positions, additional constraints can be elicited for the constraining objects, i.e., $o_2$ and $o_1$ in this example. In particular, the additional constraints are: 
\begin{equation}
    \emph{Don't move $o_2$, if $o_1$ is at its goal and $o_3$ is at its start.}
\end{equation}
\begin{equation}
    \emph{Don't move $o_1$, if $o_2$ is at its goal and $o_3$ is at its start.}
\end{equation}
For generalization, one more example (Fig. \ref{fig:reachability_constraints}(c)) is given where a grasping configuration intersects with more than one object position (start position of $o_1$ and goal position of $o_4$). In this case, $o_3$ is unreachable if the event $G_2 \bigcap (S_1 \bigcup G_4)$ occurs. The event can be decoupled as
\begin{equation}
    G_2 \bigcap (S_1 \bigcup G_4) = (G_2 \bigcap S_1) \bigcup (G_2 \bigcap G_4)
\end{equation}
where $G_2 \bigcap S_1$ can be handled similar to the constraint (1) of Fig. \ref{fig:reachability_constraints}(a),  while $G_2 \bigcap G_4$ can be handled similar to constraint (2) of Fig. \ref{fig:reachability_constraints}(b).

The above reachability constraints readily generalize to $k$ grasping configurations and $n$ objects. All the reachability constraints can be stored in a container $\constraintspace$ to indicate invalid actions of moving certain objects at a certain arrangement.

Grasping configurations can be acquired via an IK solver. An improved version of the Samuel Buss IK library was used \cite{buss2005selectively}. In practice, different IK solutions place the last few links close to the gripper in similar locations, inducing similar reachability constraints. Given that, it usually suffices to consider only one IK per grasp.

\subsection{Monotone Instances}

A lazy rearrangement solver ($\tt \LRS$) is proposed to solve local, monotone tasks $\alpha_S \rightarrow \alpha_F$ where $\alpha_S$ is the start arrangement.
Alg. 1 summarizes the two steps of the solver: (1) obtain reachability constraints $\constraintspace$ from the task (Line 1); (2) use these constraints to grow lazily a search tree rooted at $\alpha_S$ (Line 2). Two definitions are introduced here to avoid potential confusion: In the context of a lazy tree, a node $\alpha_j$ is \emph{connected} to another node $\alpha_i$ if the edge $\alpha_i \rightarrow \alpha_j$ does not violate reachability constraints. In contrast, $\alpha_j$ is \emph{accessible} from $\alpha_i$ if $\alpha_i \rightarrow \alpha_j$ is verified by a motion planner to exist. Connectivity is a prerequisite for accessibility. 

\begin{algorithm}[t!]
\label{alg:LRS}
\DontPrintSemicolon
\begin{small}
\KwIn{$\alpha_S$, $\alpha_F$, $\objects$}
\KwOut{$T_{sub}$}
\SetKwComment{Comment}{\% }{}
\caption{Lazy Rearrangement Solver ($\tt \LRS$)}
\SetAlgoLined
    $\constraintspace = \textsc{ObtainConstraints}(\alpha_S, \alpha_F, \objects)$\\
    \textbf{return} $T_{sub}, flag = \textsc{GrowLocalTree}(\emptyset, \alpha_S, \alpha_F, \constraintspace)$\\
\end{small}
\end{algorithm}

Step 2 of Alg. 1 is detailed in Alg. 2, which is a recursive routine to grow a local tree $T_{sub}$ from $\alpha_C$ to $\alpha_F$ that respects the reachability constraints $\constraintspace$. Here $\alpha_C$ refers to the arrangement where the search process is currently at. The output refers to a local, monotone tree $T_{sub}$ and a flag indicating if a solution is found (initially set to false, Line 1). 

For each object $o \in \overline{\objects}(\alpha_C)$ not yet at its goal given $\alpha_C$ (Line 2), a forward-checking routine is performed (Line 3) to evaluate if moving $o$ violates constraints $\constraintspace$ at $\alpha_C$. If it does, then this action is pruned at $\alpha_C$ (Line 4, also see Fig. \ref{fig:lazy_evaluation_key_steps}(a)). Therefore, $\tt LRS$ identifies dead-ends and prunes the search space by reducing the action space. If $\constraintspace$ are satisfied, the resulting node $\alpha_{new}$ is generated (Line 5) where $\alpha_{new}[o] = \alpha_F[o], \alpha_{new}[\objects \setminus \{o\}] = \alpha_C[\objects \setminus \{o\}]$. If $\alpha_{new}$ is not in the tree, it will be added to the tree lazily (Line 6-7) without path verification to save computation. The tree keeps growing lazily in a recursive manner (Line 13) until $\alpha_F$ is connected to $\alpha_S$ via a branch (Line 8, also see Fig. \ref{fig:lazy_evaluation_key_steps}(b)) where the path verification will be performed on the edges of the branch (Line 9), which is described in Alg. 3. 

A global parameter $mode$ is used to indicate the current search mode (backtracking or backjumping). It is in backtracking mode when Alg. 2 is first called in step 2 of Alg. 1. If the branch is valid, the solution is found and it quits Alg. 2 with backtracking (Line 11, 13-14). If not, the search switches to the backjumping mode (Line 10), in which the search tree jumps back to the last accessible node $\alpha_{last}$ to continue (Line 15-17, also see Fig. \ref{fig:lazy_evaluation_key_steps}(e)). The tree is returned when all alternatives have been exhausted (Line 18). 

\begin{algorithm}[t!]
\label{alg:grow_local_tree}
\DontPrintSemicolon
\begin{small}
\KwIn{$T_{sub}$, $\alpha_C$, $\alpha_F$, $\constraintspace$}
\KwOut{$T_{sub}$, $flag$}
\SetKwComment{Comment}{\% }{}
\caption{$\textsc{GrowLocalTree}$}
\SetAlgoLined
    $flag = False$ \\
    \For{$o \in \overline{\objects}(\alpha_C)$}{
        $valid = \textsc{ForwardChecking}(o, \alpha_C, \constraintspace)$ \\
        \lIf{not $valid$}{continue}
        $\alpha_{new} = \textsc{getNewNode}(o, \alpha_C, \alpha_F)$ \\
        \lIf{$\alpha_{new} \in T$}{continue}
        $T_{sub} = \textsc{AddNode}(\alpha_{new}, \alpha_C, T_{sub})$ \\
        \If{$\alpha_{new} = \alpha_F$}{
            $T_{sub}, success, \alpha_{last} = \textsc{VerifyBranch}(\alpha_{new},T_{sub})$\\
            \lIf{not $success$}{$mode = $ backjumping}
            \Return $T_{sub}, success$
        }
        \Else{
            $T_{sub}, flag = \textsc{GrowLocalTree}(T_{sub}, \alpha_{new}, \alpha_F, \constraintspace)$ \\
            \lIf{$flag$}{\Return $T_{sub}, flag$}
            \If{$mode = $ backjumping}{
                \lIf{$\alpha_{new} = \alpha_{last}$}{$mode = $ backtracking}
                \lElse{\Return $T_{sub}, flag$}
            }
        }
    }
    \Return $T_{sub}, flag$
\end{small}
\end{algorithm}

\begin{figure}[b!]
    \centering
    \includegraphics[width=0.42\textwidth]{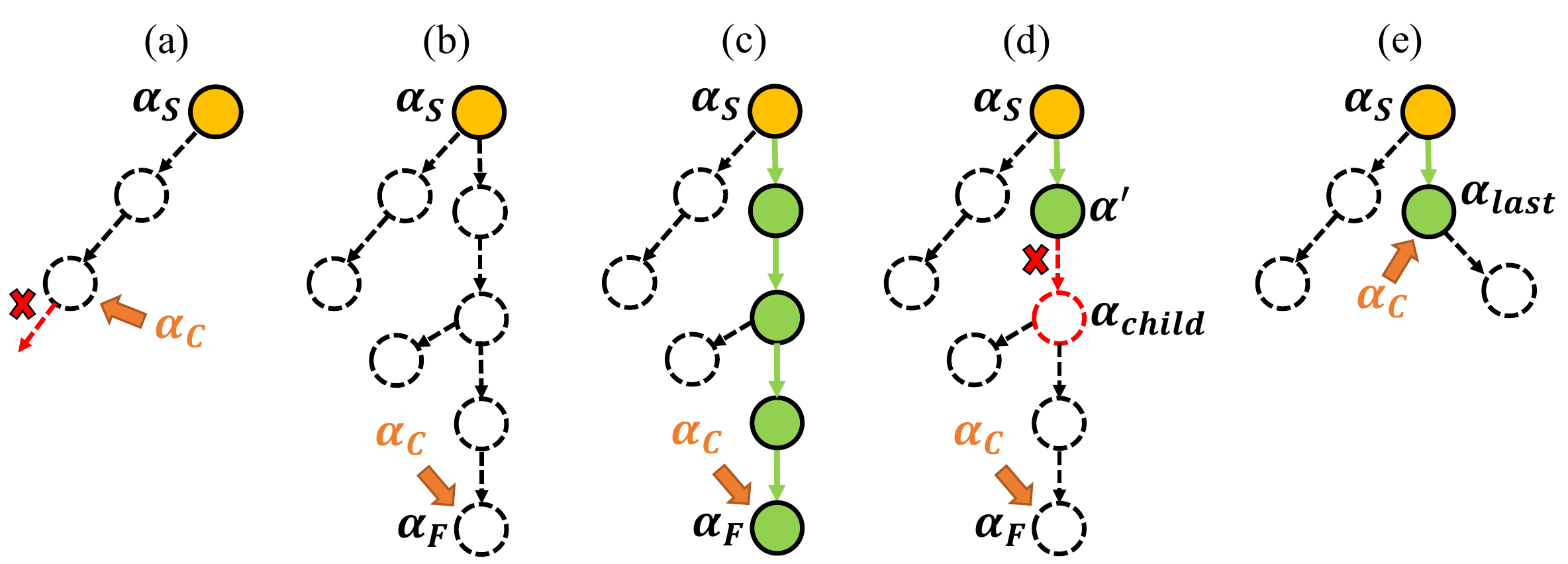}
    \caption{Lazy tree in $\tt LRS$. 
    Black dashed circles and arrows represents nodes and edges without path verification while green solid ones represents those with path verification.
    Here $\alpha_C$ indicates the current stage of the search. (a) Forward-checking phase (Alg. 2, Line 3-4). A tree cannot expand upon $\alpha_C$ due to constraint violation. (b) The tree connects $\alpha_S$ to $\alpha_F$ lazily (Alg. 2, Line 8). (c) The branch pass path verification (Alg. 3, Line 11). (d) Path verification fails at the edge $\alpha^{\prime} \rightarrow \alpha_{child}$ and is terminated. (e) The subtree rooted at $\alpha_{child}$ is trimmed (Alg. 3, Line 9-10) and the search jumps back to the last accessible node $\alpha_{last}$ and backtracks to grow the tree lazily (Alg. 2, Line 15-17). } 
    \label{fig:lazy_evaluation_key_steps}
\end{figure}

\begin{algorithm}[t!]
\label{alg:verify_branch}
\DontPrintSemicolon
\begin{small}
\KwIn{$\alpha_C$, $T_{sub}$}
\KwOut{$T_{sub}$, $success$, $\alpha_{last}$}
\SetKwComment{Comment}{\% }{}
\caption{$\textsc{VerifyBranch}$}
\SetAlgoLined
    $\alpha^{\prime} = \textsc{NearestNode}(\alpha_C, T_{sub})$ \\
    \While{$\alpha^{\prime} \neq \alpha_C$}{
        $\alpha_{child} = \alpha^{\prime}.child$ \\
        $success = \textsc{PathVerification}(\alpha^{\prime}, \alpha_{child})$ \\
        \If{$success$}{
            $T_{sub} = \textsc{AddEdge}(\alpha^{\prime}, \alpha_{child}, T_{sub})$ \\
            $\alpha^{\prime}=\alpha_{child}$ \\
        }
        \Else{
            $\textsc{DeleteTree}(\alpha_{child}, T_{sub})$ \\
            \Return $T_{sub}, success, \alpha^{\prime}$
        }
    }
    \Return $T_{sub}, success, \alpha^{\prime}$
\end{small}
\end{algorithm}

Alg. 3. verifies the branch that connects $\alpha_F$ from $\alpha_S$. It first traces back to the nearest node $\alpha^{\prime}$ that is accessible from the root (Line 1) to avoid duplicate edge verification. From there, it checks the validity of each edge (Line 3-4). If an edge $\alpha^{\prime} \rightarrow \alpha_{child}$ is valid (Line 5), it marks $\alpha_{child}$ as accessible from $\alpha^{\prime}$ (Line 6) and moves on to the next edge (Line 7). If not (Line 8), the branch verification stops at the edge $\alpha^{\prime} \rightarrow \alpha_{child}$ (see Fig. \ref{fig:lazy_evaluation_key_steps}(d)) and deletes any subtree rooted at $\alpha_{child}$ (Line 9) as $\alpha_{child}$ is not accessible from $\alpha^{\prime}$. It returns the trimmed lazy tree $T_{sub}$ and records the last accessible node $\alpha_{last} = \alpha^{\prime}$ (Line 10, also see Fig. \ref{fig:lazy_evaluation_key_steps}(e)). If all the edges are valid, the task is solved and the branch on $T_{sub}$ is the solution (Line 11, also see Fig. \ref{fig:lazy_evaluation_key_steps}(c)).

\begin{theorem}[Completeness of $\tt \LRS$]\label{thm:local_solver_complete}
$\tt \LRS$ achieves the same completeness properties as the underlying motion planner. 
\end{theorem}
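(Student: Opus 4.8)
The plan is to establish completeness \emph{relative to the motion planner} by proving two directions: soundness (every plan $\tt \LRS$ returns is valid) and relative completeness (whenever a monotone solution exists that the motion planner can certify over the given grasp set, $\tt \LRS$ returns one). Soundness is the easy direction: a plan is returned only when a full branch from $\alpha_S$ to $\alpha_F$ has been accepted by \textsc{VerifyBranch} (Alg. 3), which invokes \textsc{PathVerification} on every edge of the branch; hence each transition is a motion-planner-certified collision-free pick-and-place, and their concatenation is a valid monotone solution. For the hard direction I would reduce $\tt \LRS$ to a known-complete eager baseline — the exhaustive dynamic-programming search $\tt \DFSDP$ that calls the motion planner on every edge — and show that the two reject exactly the same set of solutions. $\tt \LRS$ differs from this baseline in only two respects: (i) it prunes edges that violate reachability constraints in $\constraintspace$ via forward-checking, and (ii) it defers \textsc{PathVerification} until a lazy branch reaches $\alpha_F$, recovering through backjumping on failure. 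I would treat these two modifications in turn.

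First, I would argue that constraint-based pruning is \emph{sound}: no edge lying on a valid monotone solution is ever removed by forward-checking. This follows from the construction of $\constraintspace$: a reachability constraint forbids the action ``move $o$ at $\alpha_C$'' only when, for every grasp of $o$, the corresponding grasping configuration is blocked by some object held at its start or goal in $\alpha_C$. Under the stated grasp resolution, this means no collision-free arm path exists to pick-and-place $o$ at $\alpha_C$, so the motion planner would reject that same edge in the eager baseline. Thus forward-checking only discards edges the baseline would discard anyway, and the set of candidate solution branches is preserved. The caveat ``up to the resolution of grasps'' is precisely what ties this argument to the completeness of the underlying motion planner, which operates over the identical grasp set.

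Second, I would argue the lazy search still explores every arrangement the eager baseline explores. Using the dynamic-programming observation that an arrangement state — not the ordering that produced it — determines feasibility of the remaining task, each connected arrangement need be expanded only once, and the visited check (Alg. 2, Lines 6--7) enforces this without loss. The delicate point, and the step I expect to be the main obstacle, is showing that the combination of subtree trimming (Alg. 3, Line 9) with backjumping (Alg. 2, Lines 15--17) never permanently excludes a reachable arrangement. When an edge $\alpha' \rightarrow \alpha_{child}$ fails verification, \textsc{DeleteTree} removes the subtree below $\alpha_{child}$ and the search jumps back to the last accessible node $\alpha_{last} = \alpha'$ before resuming backtracking. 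I would have to verify that this deletion also retracts $\alpha_{child}$ and its descendants from the set of visited nodes, so that each remains eligible for rediscovery through a different, verifiable parent edge; otherwise the visited check could spuriously block an accessible alternative and introduce a false negative.

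Granting this, a standard induction on the number of accessible edges shows that backjumping returns control to $\alpha_{last}$ and then exhausts all remaining constraint-satisfying expansions of every accessible node, so every branch of verified edges emanating from $\alpha_S$ is eventually generated and tested. Combining the two parts, $\tt \LRS$ certifies a branch if and only if a motion-planner-verifiable monotone solution exists, which is exactly the completeness of the underlying motion planner, as claimed.
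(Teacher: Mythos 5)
Your proposal is correct in substance but structured quite differently from the paper's argument. The paper proves the theorem by contradiction: it assumes a monotone solution $\Pi'$ exists that $\tt \LRS$ misses, maps $\Pi'$ to a path $P'$ in the arrangement space, takes the first edge of $P'$ absent from $T_{sub}$, and argues (using the fact that a valid edge necessarily respects $\constraintspace$ and would be certified by the motion planner) that the target node must already be in the tree with a different parent and have been deemed a dead end --- then recurses on the sub-path from that node to $\alpha_F$ to conclude that either all of $P'$ ends up in $T_{sub}$ or some other solution branch does. Your decomposition into soundness of the returned plan, soundness of constraint pruning, and exhaustiveness of the lazy exploration is a genuinely different route, and it makes explicit two things the paper leaves implicit: that $\constraintspace$ only forbids actions for which no collision-free grasp exists up to the grasp resolution (so forward-checking never discards an edge of a verifiable solution), and that state-based memoization is lossless because feasibility of the residual task depends only on the arrangement, not on the order that produced it. The one step you flag but leave conditional --- whether \textsc{DeleteTree} retracts trimmed nodes from the visited set so they can be rediscovered via a different parent --- is resolved by the algorithm itself: Alg.~3, Line~9 removes the subtree rooted at $\alpha_{child}$ from $T_{sub}$, and the membership test in Alg.~2, Line~6 is against the tree, so deleted arrangements become eligible for re-expansion; this matters precisely because, while lazy connectivity from $\alpha_{child}$ to $\alpha_F$ is parent-independent (the DP observation), accessibility of $\alpha_{child}$ is not, so a failed edge into it must not permanently exclude it. With that observation supplied your induction closes, and your writeup is arguably more careful than the paper's at exactly the point you identified as delicate; the paper's contradiction argument reaches the same conclusion with less bookkeeping but glosses over that interaction.
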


\begin{proof}
Assume there is a monotone solution $\Pi^{\prime}$ but $\tt \LRS$ fails to find it. 
Since $\Pi^{\prime}$ is monotone, each object moves directly from its start to its goal. Following the order, there is a corresponding path $P'$ in the state space of $\tt \LRS$, which is a sequence of states with the first state being $\alpha_S$ and the others being the resulting states of actions in $\Pi^{\prime}$ in the order.
Given the failure assumption, let the first edge in $P'$ that is not in $T_{sub}$ be $\alpha_1\rightarrow \alpha_2$.
Since $\alpha_1\rightarrow \alpha_2$ is a valid edge that respects constraints $\mathcal C$, it should be eventually discovered by a motion planner (e.g., assuming probabilistic completeness). Therefore, the only reason that $\alpha_2$ is not linked to $\alpha_1$ when $\tt \LRS$ explores $\alpha_1$ is that $\alpha_2$ corresponds to a node that has been considered already (with a different parent than $\alpha_1$) and deemed as a “dead end”, i.e., can't be connected to $\alpha_F$.

If $\alpha_2=\alpha_F$, then it means $\tt \LRS$ finds a path from $\alpha_I$ to $\alpha_F$ and the path succeeds in the branch verification process, which contradicts the failure assumption. Otherwise, since 
the search tree is developed in a depth first manner, 
it suggests that $\tt \LRS$ cannot find a path in the state space from $\alpha_2$ to $\alpha_F$. But there is a sub-path $P''$ in $P'$ from $\alpha_2$ to $\alpha_F$. 
Recursively, all the edges in $P'$ are in $T_{sub}$ or there is another path from $\alpha_I$ to $\alpha_F$ in $T_{sub}$.

Thus, given a monotone rearrangement instance, whether $\tt \LRS$ is able to find a solution depends on whether the underlying motion planner can find one. In other words, the proposed $\tt \LRS$ does not degrade the properties of the underlying motion planner (e.g., probabilistic/resolution completeness). For instance, if a solution has not been found but exists, a straightforward iterative strategy can provide to the motion planner additional resources (more sampled configurations, object position candidates, grasps and IK solutions) and then the task planner is called again.\end{proof}

The motion planner used is the asympt. optimal Probabilistic Roadmap (PRM*) \cite{karaman2011sampling}.




\subsection{Non-Monotone Instances}

For non-monotone problems where some objects need to be moved to buffers first to admit a feasible solution, a global planner is needed, as the local solver only examines the monotonicity of a local task. The global planner assigns a local task to the local solver and concatenates the local tree $T_{sub}$ returned by the local solver to build a global tree $T_g$. If the resulting $T_g$ does not lead to a solution, an action is needed to move an object to a buffer at an arrangement. Such actions are called \emph{perturbations} in the context of the global planner. A new tree node $\alpha_{pert}$ (the blue node in Fig. \ref{fig:history_of_methods}(a)) will be generated upon perturbation, from which a new local task $\alpha_{pert} \rightarrow \alpha_F$ is assigned to the local planner.

\begin{figure}[b!]
    \centering
    \includegraphics[width=0.26\textwidth]{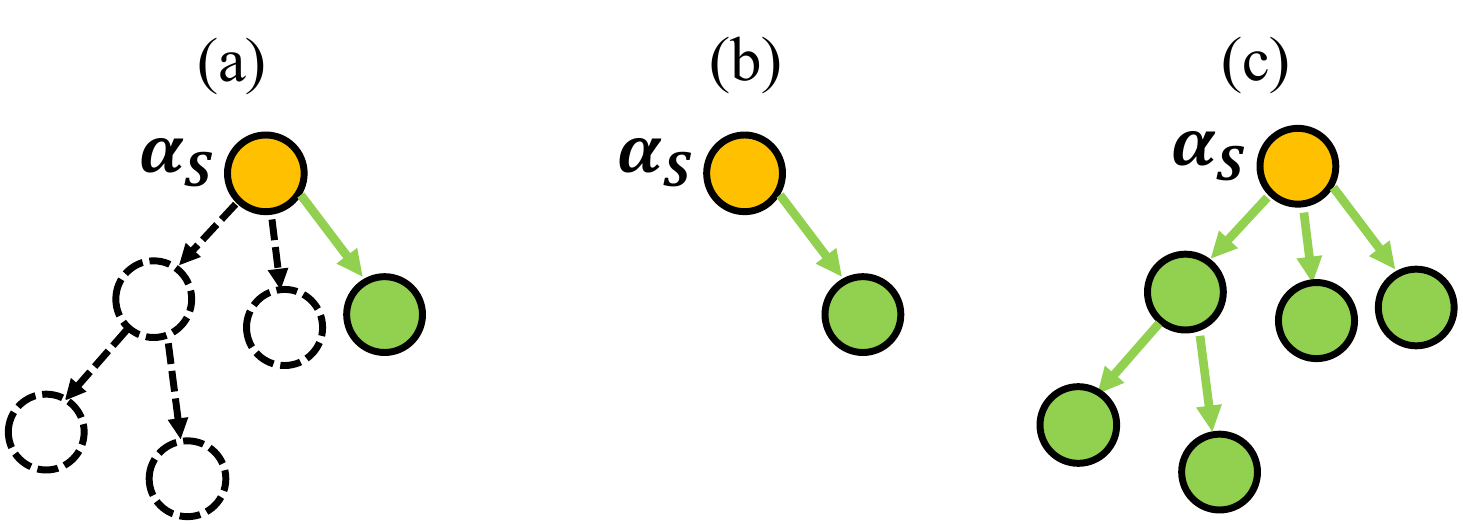}
    \caption{(a) A lazy search tree returned by the local solver $\tt LRS$ with both verified (green solid arrows) and unverified (black dashed arrows) edges. (b) The tree concatenated by a greedy global planner. (c) The tree concatenated by a conservative global planner. } 
    \label{fig:lazy_tree_concatenation}
\end{figure}

Consider a lazy local tree $T_{sub}$ in Fig. \ref{fig:lazy_tree_concatenation}(a). There are different ways of concatenating $T_{sub}$ to the global tree $T_g$.  A greedy global planner concatenates only the part of the tree which has been path verified (Fig. \ref{fig:lazy_tree_concatenation}(b)), while a conservative global planner first verifies all unverified edges in the lazy tree and then concatenates the verified tree (Fig. \ref{fig:lazy_tree_concatenation}(c)).  In the greedy case, the planner operates lazily by only accepting the verified part of the tree to avoid performing any expensive path verification. As a result, it quickly concatenates the tree but giving up unverified nodes which could lead to a solution given a proper perturbation. In the extreme cases where no nodes except the root of the local tree is accepted, the only way the tree can grow in the greedy case is through perturbation at the root. But a perturbation is both slow (just extends one edge) and unpredictable (the object is moved to a random buffer). 
In the conservative case, the planner performs additional work to verify the unverified part of the tree, thus maintaining those nodes in the global tree. This, however, undermines the utility of lazy evaluation as the global planner takes time to make up the laziness of the local planner. The only advantage lies in the last iteration if a solution is found (in that case, the global planner only needs to concatenate the verified branch).

\begin{algorithm}[h]
\label{alg:LRS_hybrid}
\DontPrintSemicolon
\begin{small}
\KwIn{$\alpha_I$, $\alpha_F$, $\objects$, $\positionspace_{T}$}
\KwOut{$\Pi$}
\SetKwComment{Comment}{\% }{}
\caption{$\tt LRS_{hybrid}$}
\SetAlgoLined
    $T_g = \emptyset$, $\Pi = \emptyset$\\
    $T_{sub} = {\tt \LRS}(\alpha_I, \alpha_F, \objects)$\\
    $T_g = T_g + T_{sub}$\\
    \While{$\alpha_F \notin T_g$ and \textsc{TimePermitted}}{
        $\alpha_C = \textsc{SelectNode}(T_g)$\\
        $T_g, success, \alpha_{last} = \textsc{VerifyBranch}(\alpha_C,T_g)$\\
        \If{$success$}{
            $\alpha_{pert} = \textsc{PerturbNode}(\alpha_C, \alpha_F, \positionspace_{T})$\\
            \If{$\alpha_{pert} \neq \emptyset$}{
                $T_g = \textsc{AddNode}(\alpha_{pert}, \alpha_C, T_g)$\\
                $T_g = T_g + {\tt \LRS}(\alpha_{pert}, \alpha_F, \objects)$\\
            }
        }
    }
    \If{$\alpha_F \in T$}{$\Pi = \textsc{TraceBackPath}(T_g, \alpha_F, \alpha_I)$}
    \Return $\Pi$\\
\end{small}
\end{algorithm}

To leverage the benefits of both the greedy and the conservative planner, this work introduces a global planner $\tt LRS_{hybrid}$, which concatenates the lazy local tree as it is and only performs path verification if the node being selected for perturbation is unverified. In this manner, the global tree maintains both verified and unverified edges (hybrid) throughout the search. $\tt LRS_{hybrid}$ is described in Alg. 4. It solves a global task $\alpha_I \rightarrow \alpha_F$ with $n$ objects and all possible positions $\positionspace_{T}$ and outputs a path sequence $\Pi$. The global task is first assigned to the local solver, from where the solver tests the monotonicity of the problem (Line 1-3). If the problem is non-monotone (Line 4), a node $\alpha_C$ is selected to perform a perturbation (Line 5, also see Fig. \ref{fig:perturbation_process}(a)). Before perturbation, $\alpha_C$ is checked to see if it is accessible from the root $\alpha_I$ (Line 6). If not, the perturbation terminates and restarts (Fig. \ref{fig:perturbation_process}(c)(d)). If $\alpha_C$ is accessible from $\alpha_I$ (Line 7), the perturbation is performed on $\alpha_C$ by randomly selecting an object to be placed in a buffer, which is randomly selected from $\positionspace_{T}$ (Line 8). If the perturbation is not successful, it terminates and restarts (Line 4). Otherwise (Line 9), a perturbation node $\alpha_{pert}$ is added to the tree (Line 10, also see Fig. \ref{fig:perturbation_process}(b)), from where the local solver is called to solve a local task $\alpha_{pert} \rightarrow \alpha_F$. The local solver call and the perturbation alternate until a solution is found (Line 12-13) or the time exceeds a threshold (Line 4).

\begin{figure}[b!]
    \centering
    \includegraphics[width=0.38\textwidth]{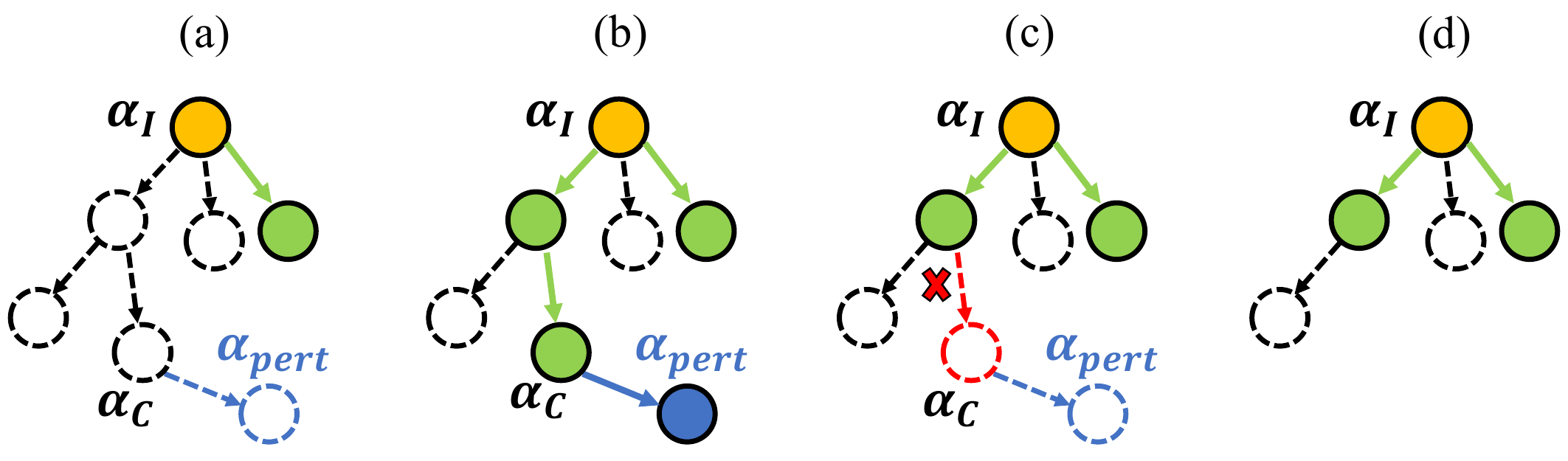}
    \caption{(a) A node $\alpha_C$ is selected to perform perturbation on a tree in Fig. \ref{fig:lazy_tree_concatenation}(a). (b) A perturbation succeeds only if (1)$\alpha_C$ is accessible from the root (green arrows) and (2) the perturbation node $\alpha_{pert}$ is accessible from $\alpha_C$ (blue arrow). (c) An unsuccessful perturbation given that $\alpha_C$ is not accessible. (d) The tree after trimming the subtree rooted at $\alpha_C$. } 
    \label{fig:perturbation_process}
\end{figure}

Despite different ways (greedy, conservative or hybrid) of tree concatenation, the planner generally follows the structure of alternating between random perturbations and sub-tree concatenation. The high-level structure is referred to as Perturbation Search (PERTS). The following discussion provides a proof that PERTS guarantees probabilistic completeness, regardless of how local trees are concatenated or the type of monotone solver used. The only requirement is that the clearance of paths $\delta$ in the rearrangement plan is lower bounded by a constant. Here $\delta$ is measured as
the minimum distance between the robot arm and the obstacles in the plan. A lemma for the completeness theorem is proved as follows.

\begin{lemma}\label{lm:complete}
Given the set of position candidates $\mathcal P_T$ with resolution $\delta_r$ and a complete motion planner,
if there is a feasible rearrangement plan $\Pi$ from $\alpha_I$ to $\alpha_F$ with clearance $\delta_{\Pi} > \sqrt{2} \delta_r$,
then there is a feasible path $P$ from $\alpha_I$ to $\alpha_F$ in the state space of the proposed global planner.
\end{lemma}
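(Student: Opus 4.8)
The plan is to prove the lemma constructively: I will transform the given feasible plan $\Pi$ into a plan $\hat\Pi$ that places every object only at grid positions drawn from $\mathcal P_T$, show that $\hat\Pi$ is still feasible, and then observe that the arrangement sequence of $\hat\Pi$ is precisely a feasible path $P$ through the nodes and accessible edges of the global planner's state space. The key device is a rounding map that sends each buffer position appearing in $\Pi$ to its nearest candidate in $\mathcal P_T$; start positions (from $\alpha_I$) and goal positions (from $\alpha_F$) already occur as fixed arrangements, so only the buffers need to be snapped.

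First I would establish the geometric estimate that motivates the $\sqrt2\,\delta_r$ threshold. Because $\mathcal P_T$ is sampled on a grid whose row/column spacing is $\delta_r$, every point of the workspace lies within half a cell-diagonal of some candidate, so the rounding map displaces any position by at most $\tfrac{\sqrt2}{2}\delta_r$. Applying it to every buffer yields a candidate plan $\hat\Pi$ whose intermediate arrangements are combinations of unmoved starts, unmoved goals, and snapped buffers; hence each object position in $\hat\Pi$ differs from the corresponding position in $\Pi$ by at most $\tfrac{\sqrt2}{2}\delta_r$.

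Next I would show $\hat\Pi$ is collision-free, which is where the clearance hypothesis is used. Along any transport in $\Pi$ the swept volume of the arm together with its grasped object is separated from every obstacle by at least $\delta_\Pi$. In $\hat\Pi$ the grasped object moves by at most $\tfrac{\sqrt2}{2}\delta_r$ and each obstacle by at most $\tfrac{\sqrt2}{2}\delta_r$, so the two displacements shrink any clearance by at most $\sqrt2\,\delta_r$; since $\delta_\Pi > \sqrt2\,\delta_r$ the separation stays strictly positive. The same $\sqrt2\,\delta_r$ budget also controls object--object overlaps in every static arrangement: for any pair of simultaneously present objects, the later-placed one had clearance $\delta_\Pi$ from the other at placement time, so their separation exceeds $\sqrt2\,\delta_r$ and survives snapping. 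To make the transport claim rigorous I would build the arm path for each step of $\hat\Pi$ by prepending and appending short local moves that carry the object between $\hat p$ and $p$ at the two grasp endpoints while reusing the interior of the original path, checking each piece against the same displacement budget; a complete motion planner then returns such a path whenever it exists. This is the step I expect to be the main obstacle: the arm path is not a rigid translate of its payload, so I must argue that perturbing the grasp endpoints by $\le \tfrac{\sqrt2}{2}\delta_r$ and perturbing the obstacle field still admits a collision-free arm motion for the entire arm, not merely for the transported object, and handle the links that are not adjacent to the payload.

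Finally I would assemble the pieces. In $\hat\Pi$ every object occupies only a start, a goal, or a candidate position, each arrangement is collision-free, and each transition is realized by a collision-free arm path, i.e.\ an accessible edge. Therefore the arrangement sequence of $\hat\Pi$ is a valid walk from $\alpha_I$ to $\alpha_F$ through nodes and accessible edges of the global planner's state space, which is exactly the feasible path $P$ asserted by the lemma. Note that this argument establishes existence of $P$ in the state space; whether the randomized search actually discovers it is the content of the completeness theorem that this lemma supports.
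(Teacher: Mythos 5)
Your proposal takes essentially the same route as the paper: snap every placing position in $\Pi$ to its nearest neighbor in $\mathcal P_T$ (each displacement at most $\tfrac{\sqrt2}{2}\delta_r$), observe that the clearance budget $\delta_\Pi > \sqrt2\,\delta_r$ absorbs the two displacements so the perturbed plan retains positive clearance, and invoke the complete motion planner to realize each edge. The paper's own proof is in fact terser than yours and silently passes over the arm-path-perturbation subtlety you flag at the end; your version is, if anything, the more careful of the two.
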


\begin{proof}
The lemma can be proven by moving all the placing positions in $\Pi$ (goal and buffer positions) to their nearest neighbors in $\mathcal{P}_T$.
Given the resolution $\delta_r$, each point in the rectangular region is at most $\dfrac{\sqrt{2}}{2} \delta_r$ away from its nearest neighbor in $\mathcal{P}_T$.
With a complete motion planner, there is a feasible path $P$ in the state space $\mathcal{S}$ of the proposed global planner given clearance: 
$\delta_{\Pi} \geq \delta-2*\dfrac{\sqrt{2}}{2} \delta_r>0$.
\end{proof}

\begin{theorem}[Probabilistic Completeness of PERTS]\label{thm:complete}
Given a complete motion planner and a constant $\delta^*$, 
if there is a rearrangement plan with clearance $\delta\geq \delta^*$, 
then the probability of the global planner to find a feasible rearrangement plan approaches 1 as the number of perturbations increase, 
regardless of which monotone solver is used.
\end{theorem}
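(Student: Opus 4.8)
The plan is to combine Lemma~\ref{lm:complete} with a stochastic-domination argument over the random perturbations. First I would fix the resolution $\delta_r$ fine enough that $\delta^* > \sqrt{2}\,\delta_r$ (e.g.\ $\delta_r < \delta^*/\sqrt 2$). Since the hypothesized plan has clearance $\delta \geq \delta^*$, Lemma~\ref{lm:complete} then guarantees a feasible path $P$ from $\alpha_I$ to $\alpha_F$ inside the discrete state space $\mathcal S$ of the global planner. This reduces the theorem to showing that PERTS reaches the node $\alpha_F$ along a path like $P$ with probability tending to $1$ as the number of perturbations grows.

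Next I would expose the structure of $P$ relative to what PERTS can generate. Writing $P$ as a sequence of arrangements, I would mark every edge that moves an object to a non-goal (buffer) position as a \emph{perturbation} and group the remaining edges into maximal \emph{monotone segments}, yielding a decomposition
\[
\alpha_I=\beta_0 \xrightarrow{M_0} \gamma_0 \xrightarrow{\rho_1} \beta_1 \xrightarrow{M_1} \cdots \xrightarrow{\rho_m} \beta_m \xrightarrow{M_m} \gamma_m = \alpha_F ,
\]
where each $M_j$ is realizable by the monotone solver started at $\beta_j$ and each $\rho_j$ places a single object at a buffer in $\mathcal P_T$. Because $\Pi$ is a finite plan, the number of perturbations $m$ is finite; I may also assume $P$ repeats no state, so $m \le |\mathcal S|$. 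The monotone solver now enters only as a black box: once PERTS has produced $\beta_j$, the call ${\tt \LRS}(\beta_j,\alpha_F,\objects)$ grows a subtree containing $\gamma_j$, since $\gamma_j$ is monotone-reachable from $\beta_j$ through valid, constraint-respecting moves. This is exactly where ``regardless of which monotone solver is used'' enters, the only requirement being completeness on monotone subproblems (Theorem~\ref{thm:local_solver_complete}).

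Then I would run the probabilistic argument. At any perturbation step, PERTS chooses a node from the current finite tree, an object among $n$, and a buffer among $|\mathcal P_T|$, so the probability of performing exactly the next needed perturbation $\rho_{j+1}$ at exactly $\gamma_j$ is bounded below by a positive constant
\[
p_0 \;\ge\; \frac{1}{|\mathcal S|\,\cdot\, n \,\cdot\, |\mathcal P_T|} \;>\;0 ,
\]
uniformly over the history. I would track progress with a counter that advances from $j$ to $j+1$ whenever the correct perturbation is made, noting that wrong perturbations only enlarge the tree and never delete nodes lying on $P$. This couples the process to a sequence of $\mathrm{Bernoulli}(p_0)$ trials in which $m$ successes suffice, so the probability of \emph{not} having completed the sequence after $N$ perturbation attempts is at most the probability of fewer than $m$ successes in $N$ such trials, which tends to $0$ as $N\to\infty$ (law of large numbers / a Borel--Cantelli estimate). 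Hence the success probability approaches $1$, and once $\gamma_m=\alpha_F$ is connected, the branch verification inside PERTS returns the plan.

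The hard part will be the coupling step: justifying the uniform lower bound $p_0$ together with the claim that the nodes needed for the remaining perturbations stay present in the tree. This requires arguing that the accessibility check and subtree trimming in PERTS (and in $\textsc{VerifyBranch}$) never remove a node of $P$; since every prefix of $P$ is feasible, each $\beta_j,\gamma_j$ is \emph{accessible} from $\alpha_I$ and survives trimming, and its branch passes verification because the motion planner is complete. I would also confirm that after a correct $\rho_j$ the monotone call materializes $\gamma_j$ rather than pruning it as a ``dead end'', which follows from $\gamma_j$ respecting all reachability constraints along $P$. Everything else---finiteness of $m$, positivity of $p_0$, and convergence of the geometric-type tail---is routine.
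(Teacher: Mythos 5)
Your proposal is correct and follows the same skeleton as the paper's proof: fix the resolution of $\mathcal P_T$ so that Lemma~\ref{lm:complete} yields a feasible path $P$ in the discrete state space $\mathcal S$, lower-bound the probability of making one unit of progress along $P$ per perturbation by a uniform constant of order $1/(|\mathcal S|\cdot n\cdot|\mathcal P_T|)$, and finish with a binomial-tail (Bernoulli-trials) estimate in the style of the RRT completeness proof. The one genuine difference is how the monotone solver is treated. You decompose $P$ into monotone segments $M_j$ handled by the local solver and interleaved perturbation edges $\rho_j$, and you count only the $\rho_j$ as the ``successes'' that must occur; this makes the argument depend on the local solver being complete on monotone subproblems (Theorem~\ref{thm:local_solver_complete}), which slightly weakens the stated ``regardless of which monotone solver is used.'' The paper instead observes that every edge of $P$ --- including moves to goal positions, since goals also lie in $\mathcal P_T$ --- can be realized by a random perturbation, so it suffices that perturbations alone reconstruct $P$; the monotone solver can only help, and no assumption on it is needed. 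Your route buys a tighter success count ($m$ perturbations rather than $|P|$ steps) and, to your credit, explicitly confronts two points the paper glosses over: that trimming and accessibility checks never delete nodes of $P$ (true, because every prefix of $P$ is feasible and passes verification under a complete motion planner), and that the per-step lower bound holds uniformly over histories. If you want to keep the theorem's full generality over monotone solvers, either adopt the paper's perturbation-only bookkeeping or state the completeness requirement on the local solver as an explicit hypothesis.
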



\begin{proof}
Given the tolerating clearance $\delta^*$, generate $\mathcal{P}_T$ with resolution $\dfrac{\sqrt{2}}{3}\delta^*$. 
Assume that there is a feasible rearrangement plan $\Pi$ for a rearrangement instance.
According to the Lemma above, 
there is a feasible path $P$ from $\alpha_I$ to $\alpha_F$ in the state space $\mathcal{S}$ of PERTS.

To prove the probabilistic completeness, 
regardless of which monotone solver is used,
it suffices to prove that the probability of finding $\alpha_F$ via perturbations approaches 1 as the perturbation number $m$ goes to infinity.

In each perturbation, PERTS moves a randomly selected object from a random state to a randomly selected position in $\mathcal{P}_T$.
Since the size of the state space $\mathcal{S}$ is upper bounded by $(|\mathcal{P}_T|+n)^n$,
for all $ 2 \leq i \leq |P|$, 
when $P[i-1]$ is in the search tree, 
the probability that $P[i]$ can be added into the search tree in the next perturbation is lower bounded by
$p > 1/(|\mathcal{S}|*n*(|\mathcal{P}_T|+n))>1/(n*(|\mathcal{P}_T|+n)^{n+1})$.

Similar to the probabilistic completeness for RRT \cite{kleinbort2018probabilistic}, consider $m$ Bernoulli trials with success prob. $p$. 
Let $X_m$ be the number of successes in $m$ trials, then:

\begin{equation*}
\begin{split}
    &Pr[\alpha_F \text{ cannot be found in }m \text{ perturbations}] \leq Pr[X_m <|P|]\\
    &\leq \sum^{|P|-1}_{i=0} \begin{pmatrix} m \\ i \end{pmatrix} p^i (1-p)^{m-i} \leq \sum^{|P|-1}_{i=0} \begin{pmatrix} m \\ |P|-1 \end{pmatrix} p^i (1-p)^{m-i}\\
    &\leq \begin{pmatrix} m \\ |P|-1 \end{pmatrix} \sum^{|P|-1}_{i=0} (1-p)^{m} \leq \begin{pmatrix} m \\ |P|-1 \end{pmatrix} \sum^{|P|-1}_{i=0} (e^{-p})^{m}\\
    &= \begin{pmatrix} m \\ |P|-1 \end{pmatrix} |P| (e^{-p})^{m} \leq \dfrac{|P|}{(|P|-1)!} m^{|P|-1}e^{-pm}\\
    &\rightarrow 0 (m \rightarrow \infty)
\end{split}
\end{equation*}

The transitions rely on $p<1/2$, which trivially holds when $|\mathcal P_T|>1$. Therefore, the probability of finding $\Pi$ approaches 1 as the number of perturbations increases.
\end{proof}

\section{Experiments}
\label{sec:experiments}
This section demonstrates the effectiveness of the proposed monotone solver $\tt \LRS$ on monotone problems and the global planner $\tt LRS_{hybrid}$ on non-monotone problems. The experiments are performed in a robotic simulator Pybullet, with a Motoman SDA10F robot tasked to rearrange objects in the confined setup (Fig. \ref{fig:experimental_setting}, left) and solutions of the proposed method are executed on a real robot system (Fig. \ref{fig:real_robot_setting}) \footnote{Codes and Videos are available online at \url{https://github.com/Rui1223/confined-space-rearrangement}.}.

\begin{figure}[t!]
    \centering
    \includegraphics[width=0.4\textwidth]{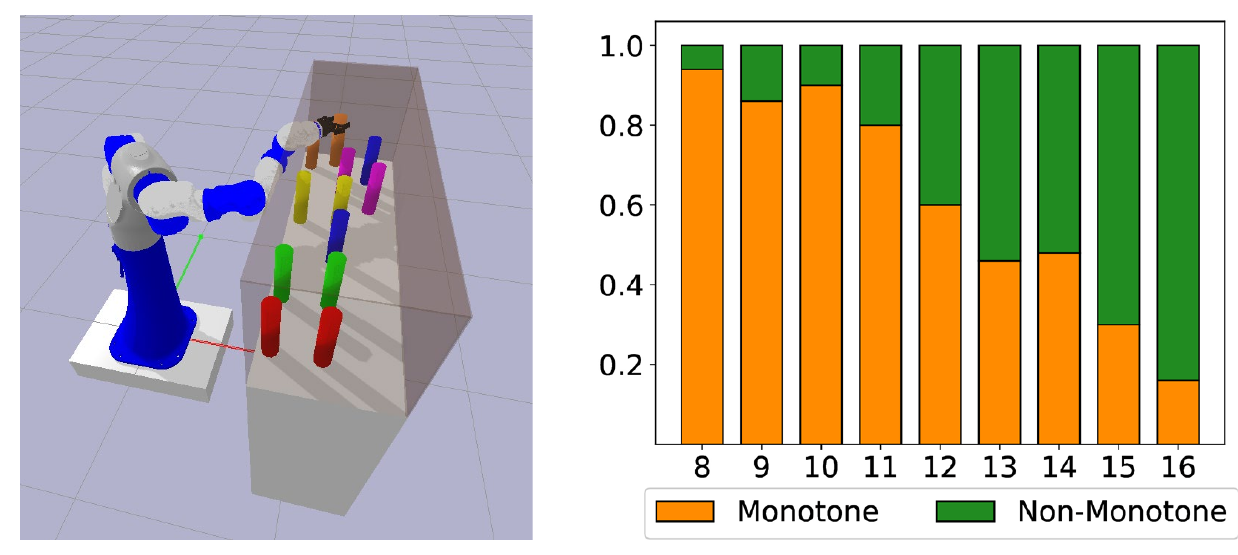}
    \caption{[Left] The robotic simulation environment where the experiments are performed. [Right] Monotone or non-monotone problem distribution per number of objects. } 
    \label{fig:experimental_setting}
\end{figure}

The problems are generated by randomly sampling start positions of the objects and assigning goal locations to these objects from a discretized set of locations in the shelf similarly to Fig. \ref{fig:problem_formulation}(c) and \ref{fig:problem_formulation}(d).
Fig. \ref{fig:experimental_setting} (right) shows the distribution of monotone and non-monotone problems for different number of objects. Given this distribution, monotone problems are selected with 8-14 objects, while non-monotone problems with 12-16 objects. 80 experiments are performed for each number of objects. The metrics for monotone and non-monotone solutions involve success rate and computation time. The computation time is separated into (1) the total computation time, and (2) the motion planning/collision checking (path verification) time. The computation time is plotted in the logarithmic scale  for better visualization.

\begin{figure}[b!]
    \centering
    \includegraphics[width=0.46\textwidth]{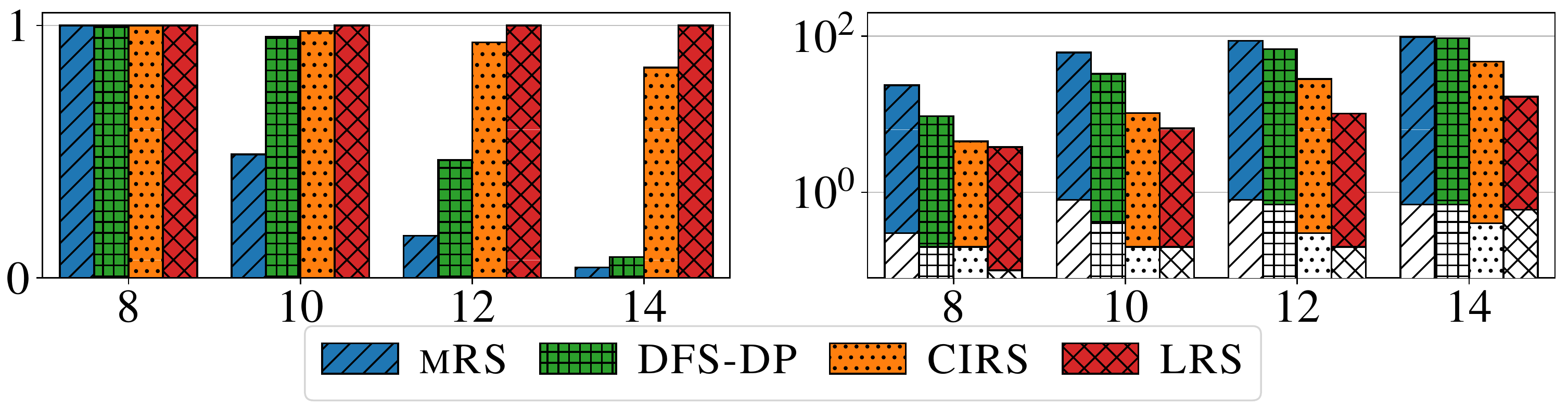}
    \caption{Algorithm comparisons in monotone instances. [left] Success rate (\%). 
    [right] Computation time (plotted in logarithm) presented as two components: the colored part is the path verification time and the white part is the other.} 
    \label{fig:monotone}
\end{figure}

{\bf Evaluation on Monotone Problems:} The proposed $\tt \LRS$ is compared to the aforementioned alternatives ($\tt \mRS$, $\tt \DFSDP$, $\tt \CIRS$). A limitation of 100 seconds is given to solve a monotone problem. Fig. \ref{fig:monotone} (left column) demonstrates that the success rate for $\tt \LRS$ stays 100\% even in harder problems (12, 14 objects) while that for $\tt \mRS$ and $\tt \DFSDP$ drops significantly (16.7\%, 4.2\% for $\tt \mRS$ and 46.7\%, 8.3\% for $\tt \DFSDP$). $\tt \CIRS$ shows relatively high success rate (83.3\% for 14 objects) but $\tt \LRS$ outperforms $\tt \CIRS$ (Fig. \ref{fig:monotone} (right column)) with faster computation time (10.2 and 16.8 seconds for 12 and 14 objects, compared to 28.3 and 47.3 seconds for $\tt \CIRS$). The lazy evaluation structure of $\tt \LRS$ significantly reduces the path verification time. For 14 objects, $\tt \LRS$ spends 16.2 seconds (96.4\% of the total time)  on path verification, while the time is 46.9 (99.2\%), 93.9 (99.3\%) and 97.3 (99.3\%) seconds for $\tt \CIRS$, $\tt \DFSDP$ and $\tt \mRS$, respectively (Fig. \ref{fig:monotone} (right column), colored part of the bars). It also confirms the observation that path verification is the computational overhead.

\begin{figure}[t!]
    \centering
    \includegraphics[width=0.46\textwidth]{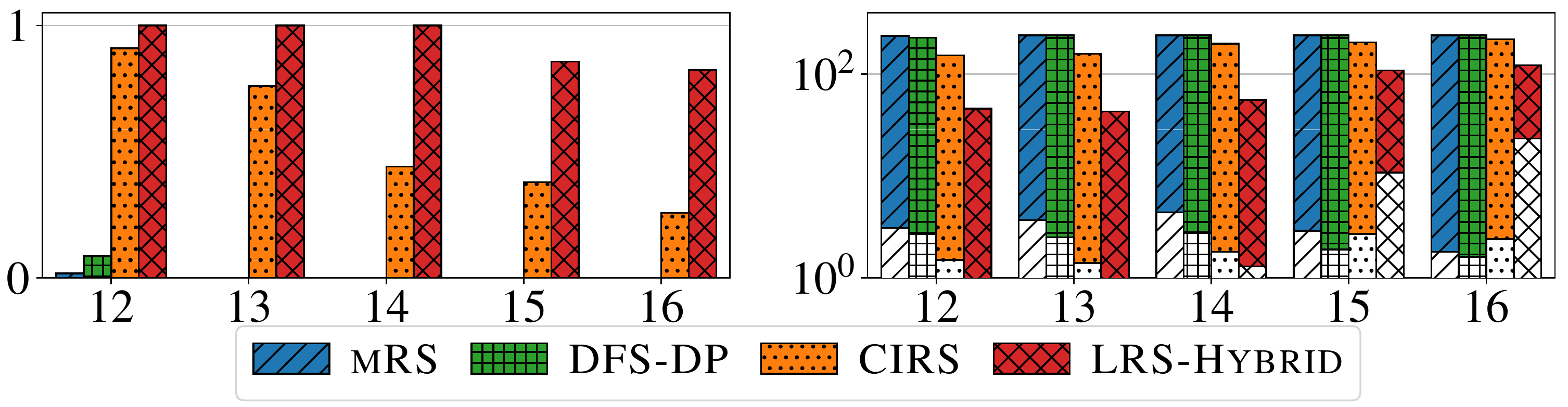}
    \caption{Algorithm comparisons in non-monotone instances. [left] Success rate (\%). [right] Computation time (logarithm) presented as two components: the colored part is the path verification time and the white part is the other. } 
    \label{fig:nonMonotoneComparison1}
\end{figure}

\begin{figure}[b!]
    \centering
    \includegraphics[width=0.46\textwidth]{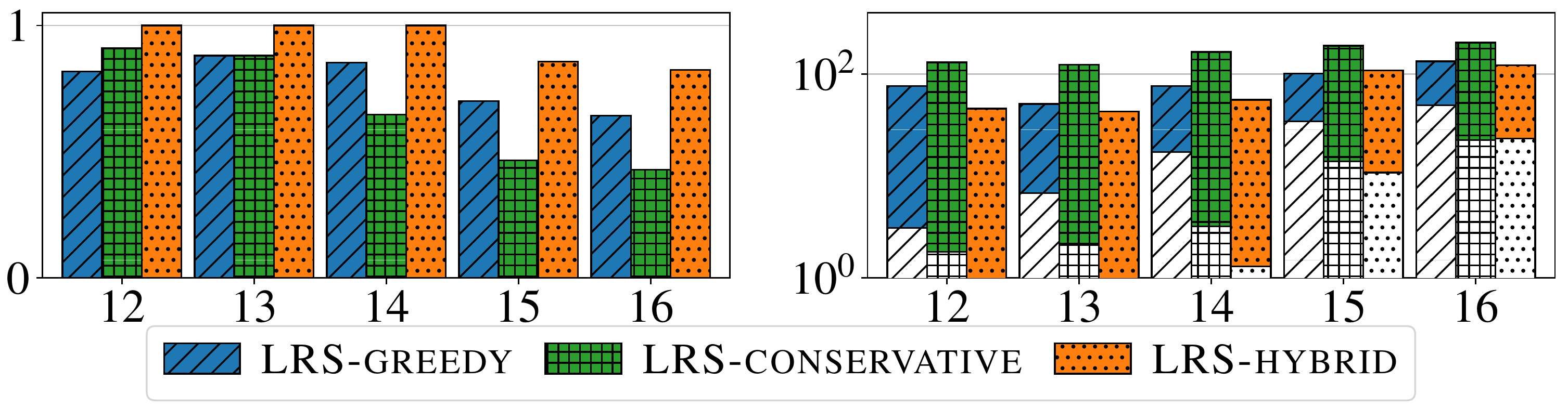}
    \caption{Ablation study in non-monotone instances. [left] Success rate (\%).
    [right] Computation time (logarithm) presented as two components: the colored part is the path verification time and the white part is the other.} 
    \label{fig:nonMonotoneComparison2}
\end{figure}

{\bf Evaluation on Non-monotone Problems:} The proposed global planner $\tt LRS_{hybrid}$ is compared to global planners which use the alternatives ($\tt \mRS$, $\tt \DFSDP$, $\tt \CIRS$) as the integrated local solver. For simplicity, the names $\tt \mRS$, $\tt \DFSDP$, $\tt \CIRS$ are still used to represent the corresponding global planner. A limitation of 240 seconds is given to solve a non-monotone problem. Fig. \ref{fig:nonMonotoneComparison1} (left) indicates that $\tt \mRS$ and $\tt \DFSDP$ fail in every instance for 13-16 objects and $\tt \CIRS$ shows a stable decrease of success rate (44.1\%, 37.9\% and 25.8\% for 14, 15 and 16 objects, respectively). In contrast, $\tt LRS_{hybrid}$ has 100\% success rate for 12-14 objects and stays high for harder problems (85.7\% for 15 objects and 82.4\% for 16 objects). $\tt LRS_{hybrid}$ solves non-monotone problems 61\% faster than alternatives, as it lazily loads the local tree and only performs path verification if the node selected for perturbation is unverified. Moreover, the time used other than path verification (Fig. \ref{fig:nonMonotoneComparison1} (right), white part of the bars) for $\tt LRS_{hybrid}$ is higher than other methods (e.g., for 16 objects, that time is 23.2 seconds for $\tt LRS_{hybrid}$, compared to 1.9 seconds for other methods), as $\tt LRS_{hybrid}$ requires steps such as obtaining constraints (Alg. 1, Line 1) and trimming inaccessible subtrees (Alg. 3, Line 9). However, the path verification time for $\tt LRS_{hybrid}$ is 98.8 seconds, much smaller than that of other methods (231.6 seconds on average). In that regard, $\tt LRS_{hybrid}$ trades heavy path verification time for some moderate increase in task planning time.

{\bf Ablation Study:} An ablation study is also performed to reveal the impact of different versions of the global planner (greedy, conservative, hybrid) on non-monotone problems. Fig. \ref{fig:nonMonotoneComparison2} (left column) shows that the success rate for $\tt LRS_{hybrid}$ is higher than $\tt LRS_{greedy}$ and $\tt LRS_{conservative}$. The two alternatives suffer from relatively low success rates for different reasons. $\tt LRS_{greedy}$ only accepts the verified part of tree to avoid path verification, giving up potential nodes. As a result, it involves more perturbation steps to grow the tree, which can be slow. $\tt LRS_{conservative}$ further verifies the unverified part of the local lazy tree before concatenating it to the global tree. As a result, it has more path verification steps, which is also slow. Fig. \ref{fig:nonMonotoneComparison2} (right column) also confirms these reasons. $\tt LRS_{greedy}$ has a smaller path verification time (colored part of the bars) but a higher other time (white part of the bars) and $\tt LRS_{conservative}$ is on the opposite. The proposed $\tt LRS_{hybrid}$ has both time in the middle compared to the other two, achieving the fastest computation time.

The number of buffers needed to solve non-monotone problems is provided in Table. 1. With the PERTS structure, the problems can be mostly solved with one or two buffers.

\begin{table}[t!]
\centering
\begin{tabular}{|c|c|c|c|c|c|}
\hline
\# objects & 12 & 13 & 14 & 15 & 16  \\ \hline
$\tt LRS_{hybrid}$ & 1.4 & 1.5 & 1.9 & 2.2 & 2.3  \\ \hline
$\tt LRS_{greedy}$ & 1.3 & 1.5 & 1.2 & 2.2 & 2.4  \\ \hline
$\tt LRS_{conservative}$ & 1.2 & 1.5 & 1.8 & 1.4 & 1.5  \\ \hline
\end{tabular}
\caption{Average number of buffers needed to solve non-monotone problems.}
\end{table}


\section{Conclusion and Future Work}
\label{sec:conclusion}
This work introduces a lazy evaluation framework, which involves a local monotone solver and a global planner for solving rearrangement in confined spaces. The proposed framework is capable of solving hard instances up to 16 objects with high-quality by improving computational efficiency. This paper argues the completeness of the local solver and the probabilistic completeness of the global planner. The solutions are also demonstrated on a real robotic system.

The confined setup does not only pose reachability challenges, but also visibility challenges. As the objects cannot be accessed from above by the robot arm, they cannot be always detected from above by a camera. This results in partial observability where the rearrangement objective changes, i.e., rearranging objects which improve the visibility of others. Not all objects may be known initially and replanning is needed as the scene is updated. The proposed techniques can be modified to be deployed in these setups. For instance, building a stochastic tree where the edge incorporates the probability the object is movable given uncertainty. 

\section{Acknowledgements}
\label{sec:acknow}
The work is partially supported by an NSF HDR TRIPODS award 1934924.

\bibliography{aaai22}

\begin{thebibliography}{26}
\providecommand{\natexlab}[1]{#1}

\bibitem[{Ahn, Kim, and Nam(2021)}]{ahn2021coordination}
Ahn, J.; Kim, C.; and Nam, C. 2021.
\newblock Coordination of two robotic manipulators for object retrieval in
  clutter.
\newblock \emph{arXiv preprint arXiv:2109.15220}.

\bibitem[{Ahn et~al.(2021)Ahn, Lee, Cheong, Kim, and Nam}]{ahn2021integrated}
Ahn, J.; Lee, J.; Cheong, S.~H.; Kim, C.; and Nam, C. 2021.
\newblock An integrated approach for determining objects to be relocated and
  their goal positions inside clutter for object retrieval.
\newblock In \emph{2021 IEEE International Conference on Robotics and
  Automation (ICRA)}, 6408--6414. IEEE.

\bibitem[{Bart{\'a}k, Salido, and Rossi(2010)}]{bartak2010constraint}
Bart{\'a}k, R.; Salido, M.~A.; and Rossi, F. 2010.
\newblock Constraint satisfaction techniques in planning and scheduling.
\newblock \emph{Journal of Intelligent Manufacturing}, 21(1): 5--15.

\bibitem[{Buss and Kim(2005)}]{buss2005selectively}
Buss, S.~R.; and Kim, J.-S. 2005.
\newblock Selectively damped least squares for inverse kinematics.
\newblock \emph{Journal of Graphics tools}, 10(3): 37--49.

\bibitem[{Cohen, Phillips, and Likhachev(2015)}]{Cohen2015PlanningSM}
Cohen, B.~J.; Phillips, M.; and Likhachev, M. 2015.
\newblock Planning Single-Arm Manipulations with N-Arm Robots.
\newblock In \emph{SOCS}.

\bibitem[{Dechter, Cohen et~al.(2003)}]{dechter2003constraint}
Dechter, R.; Cohen, D.; et~al. 2003.
\newblock \emph{Constraint processing}.
\newblock Morgan Kaufmann.

\bibitem[{Garrett et~al.(2020)Garrett, Huang, Lozano-P{\'e}rez, and
  Mueller}]{garrett2020scalable}
Garrett, C.~R.; Huang, Y.; Lozano-P{\'e}rez, T.; and Mueller, C.~T. 2020.
\newblock Scalable and Probabilistically Complete Planning for Robotic Spatial
  Extrusion.
\newblock In \emph{Robotics: Science and Systems}.

\bibitem[{Haralick and Elliott(1980)}]{haralick1980increasing}
Haralick, R.~M.; and Elliott, G.~L. 1980.
\newblock Increasing tree search efficiency for constraint satisfaction
  problems.
\newblock \emph{Artificial intelligence}, 14(3): 263--313.

\bibitem[{Hauser(2015)}]{Hauser2015LazyCC}
Hauser, K.~K. 2015.
\newblock Lazy collision checking in asymptotically-optimal motion planning.
\newblock \emph{2015 IEEE International Conference on Robotics and Automation
  (ICRA)}, 2951--2957.

\bibitem[{Havur et~al.(2014)Havur, Ozbilgin, Erdem, and
  Patoglu}]{havur2014geometric}
Havur, G.; Ozbilgin, G.; Erdem, E.; and Patoglu, V. 2014.
\newblock Geometric rearrangement of multiple movable objects on cluttered
  surfaces: A hybrid reasoning approach.
\newblock In \emph{2014 IEEE International Conference on Robotics and
  Automation (ICRA)}, 445--452. IEEE.

\bibitem[{Karaman and Frazzoli(2011)}]{karaman2011sampling}
Karaman, S.; and Frazzoli, E. 2011.
\newblock Sampling-based algorithms for optimal motion planning.
\newblock \emph{The international journal of robotics research}, 30(7):
  846--894.

\bibitem[{Kavraki and Bohlin(2000)}]{kavraki2000path}
Kavraki, L.; and Bohlin, R. 2000.
\newblock Path planning using lazy PRM.
\newblock In \emph{Proc. IEEE Int. Conference on Robotics and Automation. San
  Francisco}.

\bibitem[{King et~al.(2015)King, Haustein, Srinivasa, and
  Asfour}]{king2015nonprehensile}
King, J.~E.; Haustein, J.~A.; Srinivasa, S.~S.; and Asfour, T. 2015.
\newblock Nonprehensile whole arm rearrangement planning on physics manifolds.
\newblock In \emph{2015 IEEE International Conference on Robotics and
  Automation (ICRA)}, 2508--2515. IEEE.

\bibitem[{King, Ranganeni, and Srinivasa(2017)}]{king2017unobservable}
King, J.~E.; Ranganeni, V.; and Srinivasa, S.~S. 2017.
\newblock Unobservable monte carlo planning for nonprehensile rearrangement
  tasks.
\newblock In \emph{2017 IEEE International Conference on Robotics and
  Automation (ICRA)}, 4681--4688. IEEE.

\bibitem[{Kleinbort et~al.(2018)Kleinbort, Solovey, Littlefield, Bekris, and
  Halperin}]{kleinbort2018probabilistic}
Kleinbort, M.; Solovey, K.; Littlefield, Z.; Bekris, K.~E.; and Halperin, D.
  2018.
\newblock Probabilistic completeness of RRT for geometric and kinodynamic
  planning with forward propagation.
\newblock \emph{IEEE Robotics and Automation Letters}, 4(2): x--xvi.

\bibitem[{Krontiris and Bekris(2016)}]{krontiris2016efficiently}
Krontiris, A.; and Bekris, K.~E. 2016.
\newblock Efficiently solving general rearrangement tasks: A fast extension
  primitive for an incremental sampling-based planner.
\newblock In \emph{2016 IEEE International Conference on Robotics and
  Automation (ICRA)}, 3924--3931. IEEE.

\bibitem[{Lim, Srinivasa, and Tsiotras(2021)}]{lim2021lazy}
Lim, J.; Srinivasa, S.; and Tsiotras, P. 2021.
\newblock Lazy Lifelong Planning for Efficient Replanning in Graphs with
  Expensive Edge Evaluation.
\newblock \emph{arXiv preprint arXiv:2105.12076}.

\bibitem[{Mandalika et~al.(2019)Mandalika, Choudhury, Salzman, and
  Srinivasa}]{mandalika2019generalized}
Mandalika, A.; Choudhury, S.; Salzman, O.; and Srinivasa, S. 2019.
\newblock Generalized lazy search for robot motion planning: Interleaving
  search and edge evaluation via event-based toggles.
\newblock In \emph{Proceedings of the International Conference on Automated
  Planning and Scheduling}, volume~29, 745--753.

\bibitem[{Mandalika, Salzman, and Srinivasa(2018)}]{mandalika2018lazy}
Mandalika, A.; Salzman, O.; and Srinivasa, S. 2018.
\newblock Lazy receding horizon A* for efficient path planning in graphs with
  expensive-to-evaluate edges.
\newblock In \emph{Proceedings of the International Conference on Automated
  Planning and Scheduling}, volume~28.

\bibitem[{Papallas and Dogar(2020)}]{papallas2020non}
Papallas, R.; and Dogar, M.~R. 2020.
\newblock Non-prehensile manipulation in clutter with human-in-the-loop.
\newblock In \emph{2020 IEEE International Conference on Robotics and
  Automation (ICRA)}, 6723--6729. IEEE.

\bibitem[{S{\'a}nchez and Latombe(2003)}]{sanchez2003single}
S{\'a}nchez, G.; and Latombe, J.-C. 2003.
\newblock A single-query bi-directional probabilistic roadmap planner with lazy
  collision checking.
\newblock In \emph{Robotics research}, 403--417. Springer.

\bibitem[{Stilman et~al.(2007)Stilman, Schamburek, Kuffner, and
  Asfour}]{stilman2007manipulation}
Stilman, M.; Schamburek, J.-U.; Kuffner, J.; and Asfour, T. 2007.
\newblock Manipulation planning among movable obstacles.
\newblock In \emph{Proceedings 2007 IEEE international conference on robotics
  and automation}, 3327--3332. IEEE.

\bibitem[{Vieira et~al.(2022)Vieira, Nakhimovich, Gao, Wang, Yu, and
  Bekris}]{vieira2022persistent}
Vieira, E.~R.; Nakhimovich, D.; Gao, K.; Wang, R.; Yu, J.; and Bekris, K.~E.
  2022.
\newblock Persistent Homology for Effective Non-Prehensile Manipulation.
\newblock In \emph{2022 IEEE International Conference on Robotics and
  Automation (ICRA)}. IEEE.

\bibitem[{Wang et~al.(2021)Wang, Gao, Nakhimovich, Yu, and
  Bekris}]{wang2021uniform}
Wang, R.; Gao, K.; Nakhimovich, D.; Yu, J.; and Bekris, K.~E. 2021.
\newblock Uniform Object Rearrangement: From Complete Monotone Primitives to
  Efficient Non-Monotone Informed Search.
\newblock In \emph{IEEE International Conference on Robotics and Automation}.

\bibitem[{Wang, Miao, and Bekris(2022)}]{wang2022efficient}
Wang, R.; Miao, Y.; and Bekris, K.~E. 2022.
\newblock Efficient and high-quality prehensile rearrangement in cluttered and
  confined spaces.
\newblock In \emph{2022 International Conference on Robotics and Automation
  (ICRA)}, 1968--1975. IEEE.

\bibitem[{Xiao et~al.(2019)Xiao, Katt, ten Pas, Chen, and
  Amato}]{xiao2019online}
Xiao, Y.; Katt, S.; ten Pas, A.; Chen, S.; and Amato, C. 2019.
\newblock Online planning for target object search in clutter under partial
  observability.
\newblock In \emph{2019 International Conference on Robotics and Automation
  (ICRA)}, 8241--8247. IEEE.

\end{thebibliography}
\end{document}